\documentclass[letterpaper, 10 pt, conference]{ieeeconf}  

\IEEEoverridecommandlockouts                              

\usepackage{fancyhdr}
\usepackage{graphicx}
\usepackage{amssymb}
\usepackage{epstopdf}
\usepackage{amsmath} 	
\usepackage{amssymb}
\usepackage{cite}
\usepackage{multirow}
\usepackage{wrapfig}
\usepackage{lastpage}
\usepackage{subcaption}

\usepackage{algpseudocode}
\usepackage{algorithm}
\usepackage[english]{babel}

\usepackage{amsthm}
\usepackage{booktabs}

\usepackage[dvipsnames]{xcolor}

\definecolor{myorange}{HTML}{ffa600}
\definecolor{myblue}{HTML}{003d5c}

\theoremstyle{plain}
\newtheorem{theorem}{Theorem}               

\theoremstyle{definition}
\newtheorem{definition}{Definition}         
\newtheorem{assumption}{Assumption}         

\title{\LARGE \bf
Jacobian State Augmented Viable Kinematic Safety Filter
}

\author{Siddharth Singh$^{1}$ and Joshua Vaughan$^{1}$
\thanks{Notice: This manuscript has been authored by UT-Battelle, LLC, under contract DE-AC05-00OR22725 with the US Department of Energy (DOE). The US government retains and the publisher, by accepting the article for publication, acknowledges that the US government retains a nonexclusive, paid-up, irrevocable, worldwide license to publish or reproduce the published form of this manuscript, or allow others to do so, for US government purposes. DOE will provide public access to these results of federally sponsored research in accordance with the DOE Public Access Plan ( https://www.energy.gov/doe-public-access-plan )}
\thanks{$^{1}$Siddharth Singh and Joshua Vaughan are in the Manufacturing Robotics and Controls Group within the Manufacturing Science Division, Energy Science \& Technology Directorate of the Oak Ridge National Lab.
        Email: {\tt\small singhsk@ornl.gov} and {\tt\small vaughanje@ornl.gov}}%
}

\usepackage{xargs}                      
\usepackage[pdftex,dvipsnames]{xcolor}  
\usepackage[colorinlistoftodos,prependcaption,textsize=tiny,linecolor=black,backgroundcolor=lightgray]{todonotes}
\newcommandx{\unsure}[2][1=]{\todo[linecolor=red,backgroundcolor=red!25,bordercolor=red,#1]{#2}}
\newcommandx{\change}[2][1=]{\todo[linecolor=blue,backgroundcolor=blue!25,bordercolor=blue,#1]{#2}}
\newcommandx{\info}[2][1=]{\todo[linecolor=black,backgroundcolor=lightgray,#1]{#2}}
\newcommandx{\improvement}[2][1=]{\todo[linecolor=OliveGreen,backgroundcolor=OliveGreen!25,bordercolor=OliveGreen,#1]{#2}}
\newcommandx{\hide}[2][1=]{\todo[disable,#1]{#2}}
\newcommandx{\red}[2][1=]{\textcolor{red}{#2}}

\begin{document}

\maketitle
\thispagestyle{empty}
\pagestyle{empty}

\begin{abstract}
    Safety Filters (SF) are leveraged to design certifiably safe controllers, which ensures forward invariance on a defined safe set when the filter is feasible. Predictive Safety Filters (PSF) embed these constraints to a closed-loop model-based optimization method to improve constraint satisfaction, and reduce feasibility loss caused by myopic, one-step filtering. Related viability-based methods emphasize approximating a viability kernel which ensures admissible control for all future time, but requires significant computational expense. This work introduces a Jacobian-based kinematic safety filter leveraging an augmented state variable with task-space velocity and filters task-space acceleration to preserve the feasibility of a low-level filter while enforcing joint position and velocity. The proposed method introduces a nested Quadratic Program (QP) formulation, defines a conservative inner approximation of viability envelope and computes a viable planner online. The method is validated in a simplified 2R planar manipulator simulation environment and shows significant improvement compared to a baseline Control Barrier Function Quadratic Program (CBF-QP) based filter. The experiments highlights the improvements in terms of computation time compared to traditional PSF and Reference Governor approach while also showing significant reduction in feasibility violations. Lastly, a simplified demonstration on a simulated 6-DoF manipulator highlights the potential scalability of the method to higher degree-of-freedom setups.
\end{abstract}

\section{Introduction}
Many settings employ manipulators that are commanded at the kinematic level through task-space velocity references or Jacobian-based inverse kinematics rather than full rigid-body dynamics. In such settings, safety mechanisms should remain lightweight, online implementable, and compatible with limited model knowledge. 
Safe motion planning and controls in robots is critical, and has seen substantial developments in recent decades\cite{annurev_safetfilters_haimin}. This has included development of reachability-based methods as well as safety filters. While there are several model-based and data-driven approaches, there has been limited attention on kinematically controlled robots\cite{review_datadriven_sf}. In several cases,  such as manufacturing, where robots are often controlled kinematically, complete model knowledge is often not available. Hence, most such methods rely on pre-programmed trajectories and sampling based planners, which only take the path planning into account, overlooking the input constraints\cite{lavalle}. This emphasizes the need to develop model-light safety filtering approaches for kinematically-driven systems which can improve operational efficiency. 

Safety filtering ensures that the controlled plant does not violate pre-defined state and input constraints. When utilizing a safety filter at low-level, a one step input can push the robot trajectory such that the Quadratic Program (QP) becomes infeasible implying that even if a safe controller exists, this can lead to an infeasible problem with no solution\cite{review_datadriven_sf}. Predictive Safety Filters (PSF) tackle this problem by solving a predictive control problem \cite{annurev_safetfilters_haimin, review_datadriven_sf}. By embedding safety constraints directly into the optimization, the filter defines a safety-recovering control law. This ensures that the predicted state trajectory is forward invariant to the defined safe set. Additionally, recent developments have formulated this challenge as a viability-based problem\cite{zhang2025viability}. Such methods take the control input constraints and the closed-loop model into account during controller synthesis to define a viable set. PSF-based methods often utilize complete model definition and solve a Non-linear Model Predictive Control (NMPC) problem. This does not just rely on complete model knowledge but is also computationally expensive. Similar issues occur for viability preserving methods. While Control-Barrier-Function (CBF)-based approaches can ensure safety by ensuring forward invariance, this can lead to oscillatory behavior near the safe set boundary and limit the task performance \cite{Ames_cbf}.


This work proposes a viability preserving method for kinematic safety filters leveraging the Jacobian. The proposed method aims to ensure that the low-level kinematic safety filter is always viable by utilizing a high-level planner. While the proposed method does not explicitly provide kinodynamic safety, i.e. ensuring that the joint torques are within a defined safe set, the approach does ensure task space acceleration admissibility (as a proxy to kinodynamic constraints) while alleviating the need of complete model knowledge. Additionally, the proposed method can be leveraged in cases where the reference trajectory is either provided as the velocity in the task space or as a reference acceleration trajectory in the task space. The main contributions of the work are:

\begin{itemize}
    \item A nested-QP based model-light approach which ensures a viable high-level plan and kinematic safety of the low-level execution, resulting in reduction of constraint violation. 
    \item An ideal scenario Maximal Control Invariant Set (MCIS) formulation with a simplified viability envelop definition to handle tractability.
\end{itemize}
The method is validated in simulation for a 2-DoF Rotational (2R) Planar Manipulator. Both qualitative and quantitative results highlight the characteristics and online realization of the proposed approach against standard CBF approach as well as reference governor and PSF-based approaches. An additional 6-DoF-manipulator-based simulation highlights the method's potential scalability for higher dimensional setups. 

The next section discusses the past work relevant to this research and highlights their respective limitations for the problem of interest. Following this, the preliminary background is provided in Sec.~\ref{sec:prelim}, and Sec.~\ref{sec:proposed} entails the proposed method. Experimental validation and observations are presented in Sec.~\ref{sec:exp}, and finally the conclusions are drawn in Sec.~\ref{sec:conc}. 

\section{Related Work}\label{sec:related}
The past decade has seen substantial development in safe and robust control and planning, especially in robotics \cite{annurev_safetfilters_haimin}. Broadly, three major classes of methods have emerged: Hamilton-Jacobi-(HJ)-reachability based approaches, Control-Barrier-Function (CBF)-based methods, and Predictive Safety Filters (PSFs)\cite{review_datadriven_sf}. HJ-reachability-based methods provide strong safety guarantees through reachable set and invariant set computations, and have also motivated barrier-value and liveness filtering formulations \cite{choi2021robust,choi2023safety}. However, these methods often face significant scalability challenges for higher-dimensional systems such as robotic manipulators pertaining to the large state space.

Predictive Safety Filters and related receding-horizon safety methods explicitly address future constraint satisfaction by embedding safety into a predictive optimization problem \cite{wabersich2021predictive,leeman2023predictive,leeman2024stability}. These approaches are attractive because they directly tackle recursive feasibility and reduce the myopic behavior of one-step filters and can be combined with both HJB and CBF-based methods \cite{choi2023safety, psf_cbf}. However, they generally require an explicit predictive model and incur substantially higher computational cost, especially when implemented as nonlinear model predictive control problems. This becomes restrictive in settings where the robot is commanded through task-space kinematic references, such as end-effector velocity commands or demonstration-derived trajectories, and where full dynamic model information may not be readily available.

In contrast, CBF-based quadratic-program filters have proven effective for real-time safety-critical control due to their relative computational efficiency and ease of online implementation \cite{Ames_cbf}. Kinematic energy-based CBF formulations have further extended these ideas to robotic systems commanded at the Jacobian level \cite{kineamtic_cbf, energy_cbf}. Despite these advantages, standard one-step CBF-QP filters can still suffer from future infeasibility; an aggressive nominal command may satisfy the current-step safety constraint while pushing the state to a region where the next optimization becomes infeasible. Combining CBFs with predictive layers can alleviate this issue, but often reintroduces moving-horizon optimization and greater model dependence \cite{psf_cbf, psf_cbf2} or require significant data collection for learning based controllers\cite{psf_cbf3}.

Recent work has also explored safety filtering through viability-preserving and governor-style formulations\cite{ubc2012viability, choi2021robust}. Viability-based methods aim to characterize sets from which an admissible safe input exists for all future time, but exact set computation can be difficult in high-dimensional robotic systems \cite{zhang2025viability, roemer2026path}. Likewise, reference- and command-governor approaches shape the commanded reference to maintain constraint satisfaction without replacing the nominal controller entirely \cite{gilbert1985fast, governor_kod, li2023governor}. 
Among prior works, kinematic CBF control for robotic systems \cite{kineamtic_cbf} is the closest low-level baseline to the setting of interest in this work, while predictive safety filters \cite{wabersich2021predictive,leeman2023predictive} and governor-style safe tracking methods \cite{li2023governor} are the closest approaches addressing future admissibility more directly.

Rather than solving a full predictive dynamic optimization problem, the proposed method uses a Jacobian-based augmented-state formulation and a nested QP architecture to regulate task-space acceleration so that the induced joint-space command remains within a conservative viability envelope. This preserves the feasibility of the low-level kinematic CBF-QP while remaining lightweight enough for online implementation.

\section{Preliminaries}\label{sec:prelim}
Given a desired kinematic trajectory of the end-effector in the task space, $\dot{x}_{des} \in \mathbb{R}^m$, the desired nominal joint space velocity $\dot{q}_{nom} \in \mathbb{R}^n$ at any given time can be computed as:
\begin{equation}\label{eq:jacobian_def}
    \dot{q}_{nom} = J^{\dagger}(q)\dot{x}_{des}
\end{equation}
where $n$ is the Degree-of-Freedom (DoF) of the robot and $J^{\dagger}(q)\in \mathbb{R}^{n\times m}$ is the Pseudo Inverse of the Jacobian ($J \in \mathbb{R}^{m \times n}$) at the joint position $q \in \mathbb{R}^n$. In case when $J$ is not full-row rank, the pseudo inverse is replaced by the damped pseudo inverse.

To ensure safety, a Control Barrier Function Quadratic Program (CBF-QP)-based filter is employed to ensure safe certifiable operation. The CBF over the safe set of joint positions is defined as:

\begin{definition}[Valid CBF]\label{def:valid_cbf}
The scalar valued continuously differentiable function $h :~\mathbb{R}^n \rightarrow \mathbb{R}$ is a valid CBF for the joint velocity system if $\text{sup}_{\dot q \in\mathcal{V}} \nabla_q h(q)\dot q \geq -\alpha h(q)$ holds true for all $q~\in~\mathcal{Q}_{safe}$ where $\mathcal{Q}_{safe} := \{q ~\in~ \mathbb{R}^n ~|~ h(q) > 0\}$.
\end{definition}

Here $\alpha > 0 \in \mathbb{R}$, a positive constant, is the decay constant of the CBF and $\mathcal{V} \subset \mathbb{R}^n$ is the set of feasible joint velocities. Utilizing this, a QP filter can be defined as:
\begin{equation}\label{eq:basic_cbf}
\begin{aligned}  
    \dot{q}^* :=
    \min_{\dot q ~\in \mathcal{V}} \quad & \left\|\dot{q} - \dot{q}_{\text{nom}}\right\|^{2} \\
    \textrm{s.t.} \quad & \nabla_q h(q)^\top \dot{q}\geq -\alpha h(q) \\
    \quad & \dot{q} \in \mathcal{V} := \{\dot{q} \in \mathbb{R}^n ~| ~\dot{q}_{min} \leq \dot{q} \leq \dot{q}_{max}\}  \\
\end{aligned}
\end{equation}

While the QP in \eqref{eq:basic_cbf} can ensure the safety of the joint velocity trajectory, in some cases the desired nominal velocity command $\dot{q}_{nom}$ is so large that the future state is pushed to an unsafe region. Additionally, another possibility arises where the controller is theoretically safe but commanded velocity is so high that it renders the QP infeasible at the next step. In such cases, a fallback controller which uses the closest boundary value as $\dot{q}^*$ within the input feasibility constraints is often invoked in practice.

\section{Proposed Method}\label{sec:proposed}
This problem can be tackled by defining an augmented state variable which leverages the knowledge of the kinematic constraints and yields a viable governor for high-level planner such that the feasibility of the low-level safety controller is preserved. 
Figure~\ref{fig:main} shows the schematic of the proposed approach with the bi-level nested QP which allows defining the safety constraint sets at different levels.

By leveraging the augmented state definition and the Jacobian, the task space and the joint space velocity constraints are embedded within the planner dynamics. Additionally, the proposed approach allows embedding the joint space and task space position constraints directly. The task space position constraints can only be embedded indirectly. This results in a viable planner which preserves the forward invariance of the low-level controller by maintaining feasibility.

\begin{figure}
    \centering
    \includegraphics[width=\linewidth]{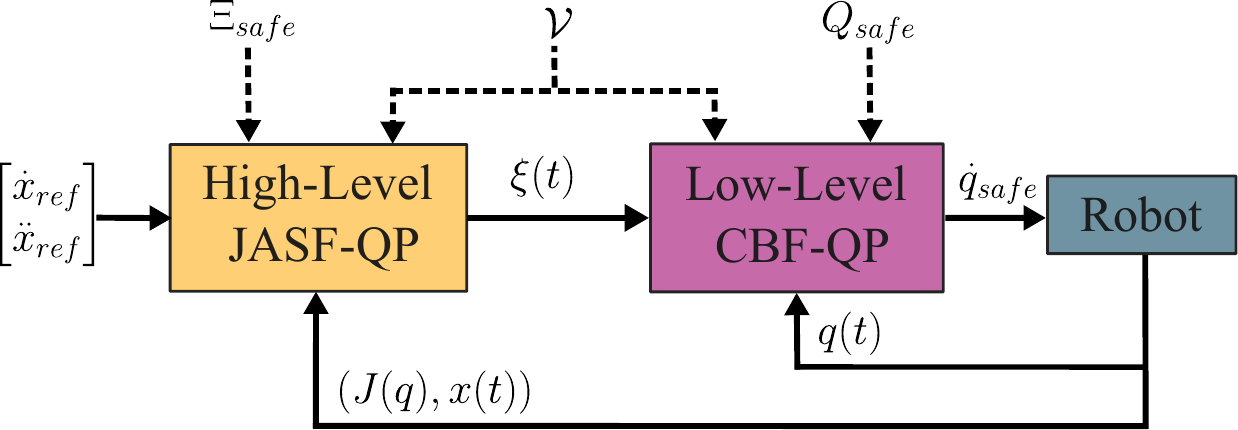}
    \caption{Schematic representation of the proposed method with the two nested QPs.}
    \label{fig:main}
\end{figure}

\subsection{Augmented State Variable Definition}
Ensuring the feasibility of low-level controller requires that the commanded velocity be viable. 
To handle the tracking problem, a high-level planner leverages an augmented variable defined as $z = [q, \dot{x}]^\top \in \mathbb{R}^{n+m}$. The task space acceleration $\ddot{x}$ is treated as the input to the augmented state dynamics, and the applied input $u_{safe} = \ddot{x}_{safe}$ is filtered from the reference $u_{ref} = \ddot{x}_{ref}$ by the outer QP. Here $\ddot{x}_{safe}$ is the filtered cartesian acceleration after ensuring feasibility of the inner CBF-QP. To ensure this, the induced nominal velocity is computed, $\xi(q,\dot{x}) = J^\dagger(q)\dot{x} \in \mathbb{R}^n$. The induced nominal velocity is provided to the inner CBF-QP as the nominal command. 
Consequently, the inner CBF-QP can be reformulated as: 
\begin{equation}\label{eq:inner_qp}
\begin{aligned}  
    \kappa(q,\xi(q,\dot{x})) :=
    \min_{\dot q ~\in \mathcal{V}} \quad & \left\|\dot{q} - \xi\right\|^{2} \\
    \textrm{s.t.} \quad & \nabla_q h(q)^\top \dot{q}\geq -\alpha (h(q)) \\
     & \dot{q} \in \mathcal{V}\\
\end{aligned}
\end{equation}

Additionally, the ``feasibility" of the inner CBF-QP is defined using a feasibility indicator:

\begin{definition}[Feasibility Indicator]\label{def:feas_ind}
Given the feasible joint velocity set $\mathcal{V}$, the feasibility indicator is defined as:
\begin{equation}\label{eq:feas}
    \text{Feas}(q) = \mathbf{1} \bigg[ ~\exists~ \dot q ~\in~ \mathcal{V} ~|~ \nabla_q h(q)^\top\dot q\geq -\alpha h(q) \bigg]
\end{equation}
\end{definition}

Subsequently, the first-order plant dynamics of the augmented state variable are given by:

\begin{equation}\label{eq:zdot}
    \dot{z} =  \begin{bmatrix}
                \dot{q} \\
                \ddot{x}
                \end{bmatrix}
        = \begin{bmatrix}
            \kappa(q,\xi(q,\dot{x})) \\
            u
            \end{bmatrix}
\end{equation}
where $u$ is the cartesian acceleration from the planner, and $\mathcal{U} \subset \mathbb{R}^m$ is the bounded set on cartesian acceleration which are defined by the task acceleration limits. To enforce these constraints while supporting feasibility of the inner CBF-QP, the viability envelope is first defined, after which an outer QP is introduced.

\subsection{Ideal Feasible Set for Planner}\label{sec:mcis}
To ensure that the low-level controller is admissible, the inner CBF-QP safety filter should always be feasible. Ideally, this is ensured by the Maximal Control Invariant Set (MCIS) of the augmented state variable $z$. Before defining the MCIS, the feasible constraint set on $z$ is defined as:

\begin{equation}
    \mathcal{C} := (\mathcal{D}_q ~\cap~\mathcal{Q}_{safe}) \times \mathcal{V}_{x}
\end{equation}
where $\mathcal{D}_q$ a domain on joint values where $J$ is smooth and away from singularities. More practical relaxations on $J$ are introduced in the later subsections. Additionally, $\mathcal{V}_{x} \in \mathbb{R}^m$ is the cartesian velocity constraints set. Given this feasible set the MCIS can be defined as: 

\begin{definition}[MCIS on $z$]\label{def:mcis}
    Given $z := [q, \dot{x}]^\top$, the state dynamics $\dot{z}$ as defined in \eqref{eq:zdot}, and the input constraints $u \in \mathcal{U}$, the MCIS $\mathcal{C}^* \subset \mathcal{C}$ on $z$ is defined as the set such that starting in the $\mathcal{C}$, there exists an input that keeps the state forward invariant in $\mathcal{C}$ for all time, i.e.
    \begin{equation}
        \mathcal{C}^* = \{z_0 \in \mathcal{C} ~\mid~ \exists~ u(\cdot) \in \mathcal{U}~s.t. ~z(t) \in \mathcal{C}, ~\forall t \geq 0\}
    \end{equation}
\end{definition}

Leveraging the MCIS, it can ensured that the there is always a commanded input which keeps the inner CBF-QP feasible at all times. 
\\
\textbf{Projection onto $\xi$-space}: While the MCIS is defined on the $z$-space, it can also be projected on to the $\xi$-space (i.e. the space of the internal joint space velocity) to introduce $\Xi^* := \text{Proj}_{\xi}(\mathcal{C}^*)$ where the $\text{Proj}_{\xi}(\cdot)$ is the projection operator that leverages the Jacobian $J$ (since $\xi = J^{\dagger}(q)\dot{x}$) to project $\mathcal{C}^*$ on to the $\xi$-space. While it is possible to estimate the MCIS for lower DoF setups (such as n = 2 or 3), for higher DoF setups it remains intractable.
To tackle this issue, an approximate viability envelope is defined in the following subsection. 

\subsection{Viability Envelope and Outer Safety Filter QP}\label{sec:jasf_qp}
Since the commanded velocity is the only variable effecting the low-level controller, the viability envelope is defined as:

\begin{equation}
    \Xi_{safe} := \{\xi \in \mathbb{R}^n ~\mid~ \xi_{min} \leq \xi \leq \xi_{max}\}
\end{equation}
It must be noted that $\Xi_{safe} \subseteq \Xi^*$ and can be calibrated offline. For higher DoF systems where $\Xi^*$ computation is intractable, a conservative $\Xi_{safe}$ can be chosen heuristically and validated empirically.    

Leveraging this definition of the viability envelope, the outer QP safety filter is formulated; to prevent confusion, it is referred to as JASF-QP (Jacobian state Augmented Safety Filter Quadratic Program) henceforth. To define the safety filter on the viability envelope, the set constraints are defined as linear inequalities instead of a non-smooth barrier functions. The upper and lower bounds are translated as:

\begin{equation}
\begin{aligned}
    \xi_j \leq\xi_{max,j} \Longleftrightarrow e_j^\top\xi \leq \xi_{max,j} \\
    \xi_j \geq\xi_{min,j} \Longleftrightarrow -e_j^\top\xi \leq -\xi_{min,j} 
\end{aligned}
\end{equation}
where $e_j$ are the standard basis vector choosing the $j^{th}$ constraint. This allows the following formulation as $a_i^\top\xi \leq b_i$ for $i = [1, \cdots, N]$, where $N = 2n$. This allows one to introduce a barrier function $g_i(\xi) := b_i - a_i^\top\xi \geq 0$. Consequently, to define the CBF style constraint: 

\begin{equation}
    \begin{aligned}
        \dot{g}_i(\xi) \geq -\gamma~g_i(\xi) \Longleftrightarrow -a_i^\top\dot{\xi} \geq -\gamma(b_i - a_i^\top\xi)\\  \Longleftrightarrow a_i^\top\dot{\xi} \leq \gamma(b_i - a_i^\top\xi)
    \end{aligned}
\end{equation}where $\gamma > 0$ is a scalar value that controls the decay rate of the inequalities. With a concatenation, the vector form can also be presented as:

\begin{equation}
    G(\xi) := \begin{bmatrix}
        g_1(\xi) \\
        \vdots\\
        g_N(\xi)
    \end{bmatrix} = b - A\xi
\end{equation}
where $A \in \mathbb{R}^{N\times n}$ has rows $a_i^\top$, and $b\in \mathbb{R}^N$ stacks the $b_i$. This allows one to redefine the viability set as: 
\begin{equation}
    \Xi_{safe} := \{\xi \in \mathbb{R}^n ~\mid~ G(\xi)\geq 0\}
\end{equation}
Given the concatenated constraints defining the viability envelope the JASF-QP can be formulated as:
\begin{equation}\label{eq:jasf_qp}
\begin{gathered}
u_{\mathrm{safe}} := \min_{u \in \mathcal{U}} \quad \|u - u_{\mathrm{ref}}\|^2 \\
\text{s.t.}\quad a_i^\top\left(\dot{J}^\dagger(q,\dot{q})\dot{x} + J^\dagger(q)u\right)
\le \gamma(b_i - a_i^\top \xi) \quad \forall i
\end{gathered}
\end{equation}
It is to be noted here that $\dot q$ here is the current joint velocity of the closed loop system, making the constraint affine in $u$.
Before the proof of forward invariance is provided the following assumptions are made:

\begin{assumption}\label{assump:jacobian}
It is assumed that $J^{\dagger}(q)$ is $\mathcal{C}^1$ on a domain $\mathcal{D}_q \in \mathbb{R}^n$ (i.e., away from singularities) and that $\dot{J}^\dagger(q,\dot{q})$ exists and is is locally bounded for all $(q,\dot q)$ encountered.  
\end{assumption}

\begin{assumption}\label{assump:inner_qp}
The inner CBF-QP \eqref{eq:inner_qp} is feasible on the considered region, and the optimizer map $\kappa(q, \xi(q,\dot{x}))$ is bounded, single valued and locally Lipschitz (or piece-wise Lipschitz).   
\end{assumption}

\begin{assumption}\label{assump:jas_qp_feas}
It is assumed that outer JASF-QP \eqref{eq:jasf_qp} is feasible on the input constraints set $\mathcal{U}$.
\end{assumption}

\textbf{Remark}: Both Assumption~\ref{assump:jacobian} and Assumption~\ref{assump:jas_qp_feas} can be relaxed in practice by introducing slack variables, for computing the Pseudo Inverse of $J$ and in the optimization cost function in \eqref{eq:jasf_qp}. 

Following this, the forward invariance of the viability envelope can be guaranteed by:

\begin{theorem}\label{theo:viab_env}
    Following the Assumption~\ref{assump:jacobian} and Assumption~\ref{assump:jas_qp_feas}, if $\xi(q(0), \dot{x}(0)) \in \Xi_{safe}$ and the outer QP is feasible at all times, then $\xi(q(t),\dot{x}(t)) \in \Xi_{safe}$ for all $t \geq 0$.
\end{theorem}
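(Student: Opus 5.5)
The plan is to treat each $g_i(\xi(t)) = b_i - a_i^\top \xi(q(t),\dot{x}(t))$ as a scalar barrier along the closed-loop trajectory and apply a comparison-lemma (Nagumo-type) argument. First, differentiate $\xi(q,\dot x) = J^\dagger(q)\dot x$ along the trajectory of \eqref{eq:zdot}. Assumption~\ref{assump:jacobian} guarantees that $J^\dagger$ is $\mathcal{C}^1$ on $\mathcal{D}_q$ and that $\dot J^\dagger(q,\dot q)$ exists and is locally bounded. With $\dot q = \kappa(q,\xi)$ and $\ddot x = u_{\mathrm{safe}}$, this gives
\begin{equation*}
\dot{\xi}(t) = \dot{J}^\dagger(q(t),\dot{q}(t))\,\dot{x}(t) + J^\dagger(q(t))\,u_{\mathrm{safe}}(t),
\end{equation*}
which is exactly the expression appearing in the constraint of \eqref{eq:jasf_qp}. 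Here $\dot q$ is the actual closed-loop joint velocity, and I will note that this is the same quantity the QP uses. Since the outer QP is assumed feasible at every time (Assumption~\ref{assump:jas_qp_feas} together with the hypothesis of the theorem), the applied input satisfies, for every $i$ and almost every $t$,
\begin{equation*}
\frac{d}{dt} g_i(\xi(t)) = -a_i^\top \dot{\xi}(t) \geq -\gamma\, g_i(\xi(t)).
\end{equation*}

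Second, I apply the comparison lemma to each scalar function $g_i(t) := g_i(\xi(t))$. Because the system is first order in $g_i$, I can argue directly. Setting $\phi_i(t) := e^{\gamma t} g_i(t)$ gives $\dot\phi_i = e^{\gamma t}(\dot g_i + \gamma g_i) \geq 0$, so $\phi_i$ is nondecreasing. Hence
\begin{equation*}
g_i(t) \geq e^{-\gamma t} g_i(0) \geq 0 \quad \text{for all } t \geq 0.
\end{equation*}
The inequality $g_i(0)\ge 0$ holds for all $i$ because $\xi(q(0),\dot x(0)) \in \Xi_{safe}$. Therefore $G(\xi(t)) \geq 0$, which means $\xi(t)\in\Xi_{safe}$ for all $t$. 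Since the bound for each $i$ is obtained independently, the vector statement follows immediately.

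The main obstacle is regularity: the argument needs $t\mapsto \xi(t)$ to be absolutely continuous and the trajectory to exist for all $t\ge 0$. The applied input $u_{\mathrm{safe}}(t)$ is the output of a QP and is at best piecewise continuous. Likewise, $\kappa$ is only locally (or piecewise) Lipschitz by Assumption~\ref{assump:inner_qp}. I would handle this as follows.
\begin{itemize}
\item Solutions are understood in the Carathéodory sense, with $u_{\mathrm{safe}}$ measurable and bounded because $\mathcal{U}$ is bounded.
\item Then $\dot x$ is absolutely continuous. Also $q$ is Lipschitz, since $\kappa$ is bounded.
\item Hence $\xi = J^\dagger(q)\dot x$ is absolutely continuous on compact intervals inside $\mathcal{D}_q$, because $J^\dagger$ is $\mathcal{C}^1$ there. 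The differential inequality therefore holds almost everywhere, and the monotonicity of $\phi_i$ follows from integrating a nonnegative derivative.
\end{itemize}
I will state explicitly that the trajectory is assumed to remain in $\mathcal{D}_q$, which is implicit in Assumption~\ref{assump:jacobian}, and that the outer-QP feasibility hypothesis is what rules out any finite time at which the constraint cannot be enforced.
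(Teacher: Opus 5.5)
Your proposal is correct and follows the paper's proof. Both identify the JASF-QP constraint with $\dot{g}_i \ge -\gamma g_i$, using $\dot{\xi} = \dot{J}^\dagger(q,\dot{q})\dot{x} + J^\dagger(q)u$, and then conclude $g_i(\xi(t)) \ge e^{-\gamma t} g_i(\xi(0)) \ge 0$ by the comparison lemma; your integrating-factor argument and the Carath\'eodory/absolute-continuity discussion add regularity detail (including the implicit requirement that the trajectory stay in $\mathcal{D}_q$) that the paper's proof takes for granted.
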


\begin{proof}
For any each constraint $g_i(\xi(t)) = b_i - a_i^\top\xi(t)$, the QP yields in an input $u$ that satisfies the constraint which is equivalent to 

\[
a_i^\top\left(\dot{J}^\dagger(q,\dot{q})\dot{x} + J^\dagger(q)u\right) \leq \gamma(b_i - a_i^\top\xi)
\]

or 

\[
-a_i^\top \dot{\xi}(t) \geq -\gamma(b_i - a_i^\top\xi(t)) \Longleftrightarrow \dot{g}_i(\xi(t)) \geq -\gamma g_i(\xi(t))
\]
since $\dot{g}_i(\xi(t)) = -a_i^\top\dot{\xi}(t)$. This holds for every $t \geq 0$ for every $i$, owing to feasibility of the JASF-QP at all times. It can be shown as a result of the standard comparison lemma that since:
\begin{equation}
    \begin{aligned}
        \dot{g}_i(\xi(t)) \geq -\gamma g_i(\xi(t)) \\
        \implies g_i(\xi(t)) \geq e^{-\gamma t}g_i(\xi(0)) \geq 0        
    \end{aligned}
\end{equation}
\end{proof}

\textbf{Remark}: This is a standard exponential CBF sufficiency condition for control invariance in a polytope (intersection of half spaces). It establishes forward invariance of $\Xi_{safe}$ by construction. It is to be noted that this does not require the computation of $\Xi^*$. 

A direct consequence of Theorem~\ref{theo:viab_env} is that the inner QP's correction $\|\dot{q}^*(t) - \xi(t)\|$ is bounded by $\|\xi\|$, since $\xi(t)$ remains within the calibrated $\Xi_{safe} \subset \mathcal{V}$. This also explains the substantial reduction in max $\xi$ observed in empirical experiments seen in Sec.~\ref{sec:exp_2r_manip}.

By leveraging the proposed nested QP formulation, a viability envelope can prevent the saturation of low-level controller improving command preservation by bounded tracking error of the inner filter. 

\subsection{Approximate Discrete Time Formulation}
While the both the safety filters can be formulated for discrete time systems, for a very small $\Delta t$, the following finite difference approximations can also be used for realization of the proposed method. 
The task space velocity can be approximated as $\dot{x}_{k+1} = v_{k+1} = v_k + u_k \Delta t$. Similarly, the induced joint nominal velocity is computed as $\xi_{k+1} = J^\dagger(q_k)v_{k+1}$, and the solution of the resulting inner QP can be represented as $\dot{q}_k = \kappa(q_k, \xi_{k+1})$. The joint positions are obtained by forward integration, $q_{k+1} = q_k + \dot{q}_k\Delta t$. Additionally, since $\dot{J}^\dagger$ can be computed with a lagged value of $\dot{q}_{k-1}$ as:

\begin{equation}
    \dot{J}^\dagger \approx \frac{J^\dagger(q_k) - J^\dagger(q_{k-1})}{\Delta t}
\end{equation}

Consequently, Algorithm~\ref{alg:jasf-cbf} shows the implementation of the proposed method with the discrete time approximations. The algorithm utilizes a specific goal tending approach to compute the $\dot x_{ref}$ and $\ddot{x}_{ref}$ as:

\begin{equation}\label{eq:pd_crtlr}
    \begin{aligned}
        \dot{x}_{ref} = K_p e_x, \quad e_x = x_{goal} - x_{k} \\
        \ddot{x}_{ref} = K_p e_x + K_d\dot{e}_x, \quad \dot{e}_x = \dot{x}_{goal} - \dot{x}_{k}
    \end{aligned}
\end{equation}
where $K_p$ and $K_d$ are user defined parameters. The same algorithm can be used when a reference is provided from sources such as human demonstrations or a trajectory generator by replacing the computed values from stored values. 

\begin{algorithm}[ht]
\caption{Online Two-Level Safety Control (JASF Outer, CBF-QP Inner)}
\label{alg:jasf-cbf}
\begin{algorithmic}[1]
\Require $x_{\mathrm{goal}}$, sampling period $\Delta t$, bounds $\Xi_{safe} := \{\xi_{\min},\xi_{\max}\}$, joint limits $ \mathcal{V} := \{\dot{q}_{\min},\dot{q}_{\max}\}$
\State Initialize $q_0,\ \dot{x}_{\mathrm{safe},0}$ and set $k\gets 0$
\While{not at goal}

    \State $x_k \gets f_{\mathrm{FK}}(q_k)$ \Comment{$f_{\mathrm{FK}}$: Forward Kinematics}
    \State $\dot{x}_k \gets J(q_k)\,\dot{q}_k$ \Comment{or measured/estimated}

    \State $(\dot{x}_{\mathrm{ref},k}, \ddot{x}_{\mathrm{ref},k}) \gets \textsc{Planner}(x_{\mathrm{goal}}, x_k, \dot{x}_k)$

    \State $\xi_k \gets J^{\dagger}(q_k)\,\dot{x}_{\mathrm{safe},k}$


    \State $u_{\mathrm{safe},k} \gets \textsc{JASF-QP}\!\left(\ddot{x}_{\mathrm{ref},k}, q_k, \xi_k, \dot{x}_{\mathrm{safe},k}, \right.$
    \Statex \quad \quad \quad \quad \quad \quad \quad \quad \quad \quad \quad \quad $\left. \dot{q}_k, J(q_k), \Xi_{safe}\right)$

    \State $\dot{x}_{\mathrm{safe},k+1} \gets \dot{x}_{\mathrm{safe},k} + u_{\mathrm{safe},k}\,\Delta t$
    \State $\xi_{k+1} \gets J^{\dagger}(q_k)\,\dot{x}_{\mathrm{safe},k+1}$

    \State $\dot{q}_{\mathrm{safe},k} \gets \textsc{CBF-QP}\!\left(\xi_{k+1}, q_k, h(q_k), \frac{\partial h}{\partial q}(q_k), \mathcal{V}\right)$

    \State $q_{k+1} \gets q_k + \dot{q}_{\mathrm{safe},k}\,\Delta t$

    \State $k \gets k+1$
\EndWhile
\end{algorithmic}
\end{algorithm}

\section{Experiments}\label{sec:exp}
The proposed method is validated in simulation on a planar 2R manipulator and a circular obstacle within the robot's workspace. Firstly, the proposed method is compared against baseline, pure CBF based method. These methods employ a simple box constraints approach for defining the Viability Envelop (VE). The method is also compared against a reference governor approach and NMPC-PSF approach. The computational performance with respect to NMPC-PSF is studied over different horizon lengths. Other characteristics such as effect of the decay factor $\gamma$ on behavior of the JASF-QP is also analyzed. Lastly, to validate the scalability of the method, it is implemented on a 6-DoF robotic manipulator in simulation for goal-reaching and obstacle avoidance scenario with the viability envelop. All the experiments were conducted using Python and were implemented on a Intel\textregistered~ Core Ultra 7 CPU with 14 Cores, 1 TB SSD, and 32 GB RAM.
\begin{table}[htbp]
    \centering
    \caption{Design Parameters}
        \begin{tabular}{l|c|c}
        \hline
        \multicolumn{2}{c|}{\textbf{Parameter}} & \textbf{Value} \\ \hline
        Gains & [$K_p$, $K_d$] & [1.5, 1.0] \\ 
        Link Lengths & $[l_1, l_2]$ & [0.4, 0.4] (m)\\ 
        Decay rates & $[\gamma, \alpha]$ & [2.0, 1.0] \\ 
        Max Time & $T$ & 12 (s) \\ \hline
        \end{tabular}
    \label{tab:params}
\end{table}


\subsection{Analysis on Planar Manipulator}\label{sec:exp_2r_manip}
The method is validated in simulation for a planar 2-DoF (2R) manipulator. The aim is to ensure that the end-effector reaches the goal while avoiding collision with an obstacle. The reference acceleration $\ddot{x}_{ref}$ is computed with a Proportional Derivative planner as defined in \eqref{eq:pd_crtlr}, where $x_{goal}$ and $x_{start}$ are the desired goal position and start position for robot tip respectively. Table~\ref{tab:params} lists all the parameters. Here, $K_p$ and $K_d$ are the gains in \eqref{eq:pd_crtlr}, $[l_1, l_2]$ are the link lengths, $T$ is the total duration of the trajectory, and $(\gamma, \alpha)$ are the higher-level and lower-level CBF decay rates as discussed in Sec.~\ref{sec:jasf_qp} and Sec.~\ref{sec:prelim} respectively.

\subsubsection{Qualitative Analysis}
Firstly, the proposed method (JASF) is validated qualitatively against the baseline method (vanilla CBF). Figure~\ref{fig:qual} compares the a) end-effector trajectory, b) $h(q)$, which is essentially a measure of distance from obstacle, and c) the evolution of the joint trajectories. It can be seen in Fig.~\ref{fig:sub1}, that the trajectory with the JASF is less erratic. It also reduces the number of violations of the VE as seen in Fig.~\ref{fig:sub3}, and as seen in Fig.~\ref{fig:sub2}, the proposed method also minimizes the CBF constraint of the low-level gets activated much sooner. Figure~\ref{fig:qual_sweep} illustrates the comparison between the trajectories for 6 random cases where the initial pose of the arm and the final goal of the tip are chosen randomly. As seen, the proposed approach reduces the trajectory length and erratic motion planning for several cases.
\begin{figure}[htbp]
    \centering
    \begin{subfigure}[b]{0.45\linewidth}
        \centering
        \includegraphics[width=\textwidth]{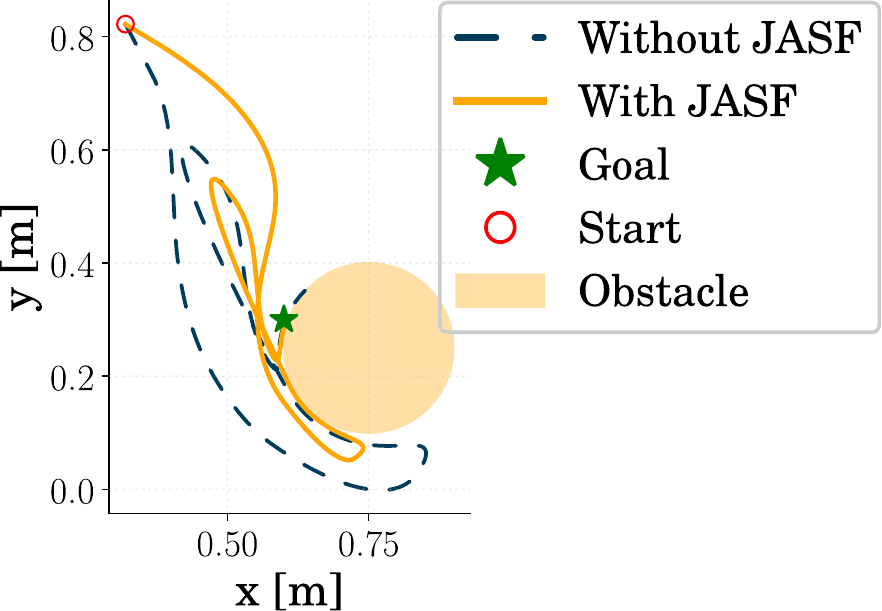}
        \caption{End-effector trajectory}
        \label{fig:sub1}
    \end{subfigure}
    \hfill 
    \begin{subfigure}[b]{0.45\linewidth}
        \centering
        \includegraphics[width=\textwidth]{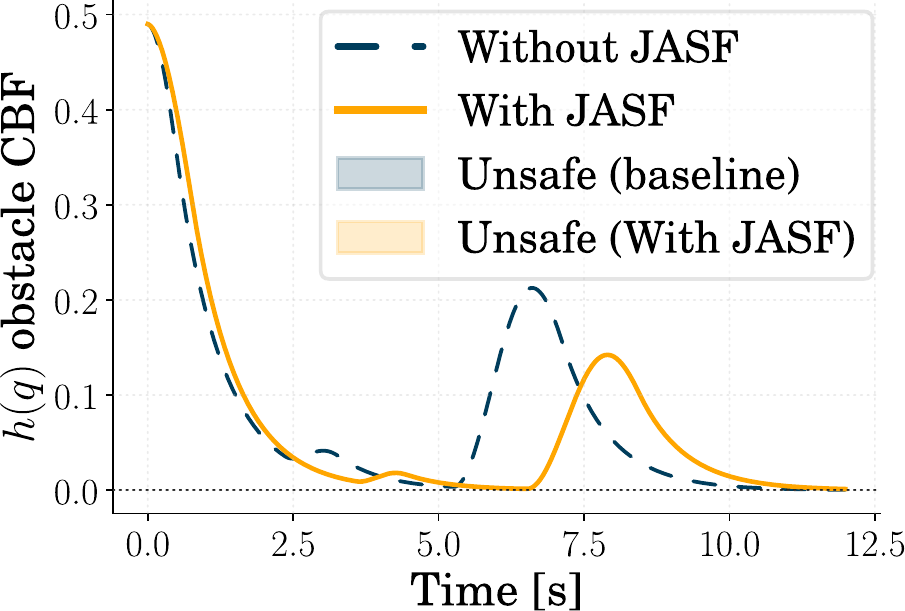}
        \caption{CBF value $h(q)$}
        \label{fig:sub2}
    \end{subfigure}
    
    \vspace{0.5cm} 

    \begin{subfigure}[b]{0.45\textwidth}
        \centering
        \includegraphics[width=\textwidth]{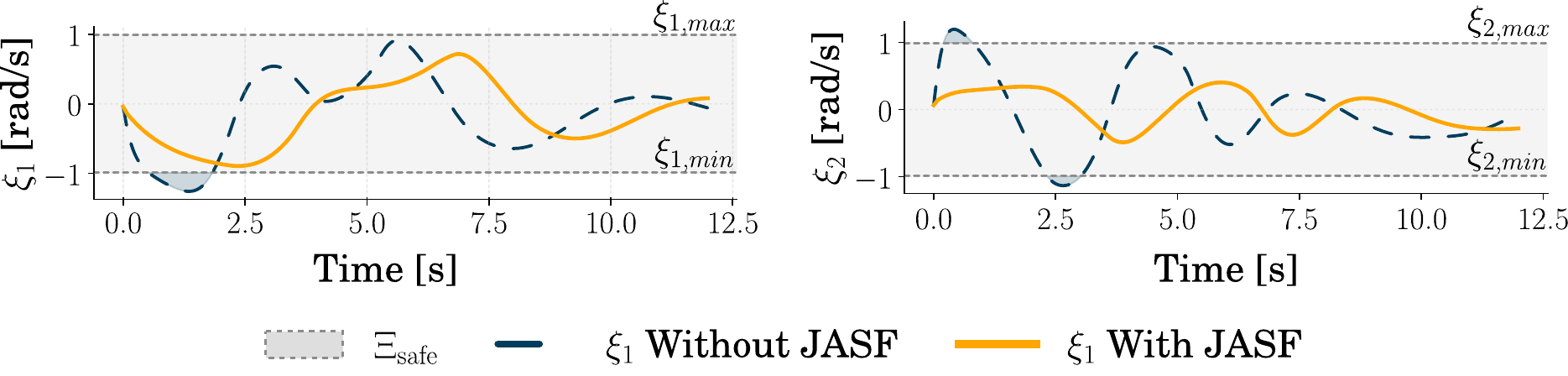}
        \caption{Evolution of commanded joint velocity. Grey region depicts the $\Xi_{safe}:= \{\xi_{min}, \xi_{max}\}$}
        \label{fig:sub3}
    \end{subfigure}
    
    \caption{Experimental comparison between using a JASF with the low-level CBF-QP filter (\textcolor{myorange}{proposed}) and without the JASF layer and only low-level CBF-QP filter (\textcolor{myblue}{baseline}) for a simple $n=2$ planar manipulator in avoiding an obstacle.}
    \label{fig:qual}
\end{figure}
\begin{figure}[htbp] 
\centering  

    \begin{subfigure}[t]{0.32\linewidth} 
        \centering 
        \includegraphics[width=\linewidth]{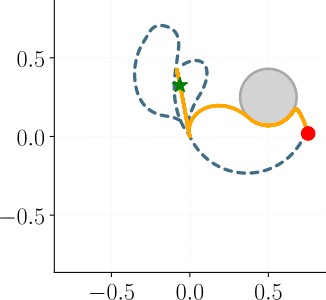} 
        \caption{Trial 1} 
        \label{fig:qual_sweep_a} 
    \end{subfigure} 
    \hfill 
    \begin{subfigure}[t]{0.32\linewidth} 
        \centering 
        \includegraphics[width=\linewidth]{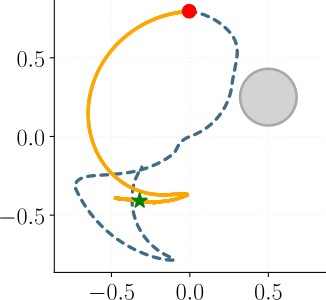} 
        \caption{Trial 2} 
        \label{fig:qual_sweep_b} 
    \end{subfigure} 
    \hfill 
    \begin{subfigure}[t]{0.32\linewidth} 
        \centering 
        \includegraphics[width=\linewidth]{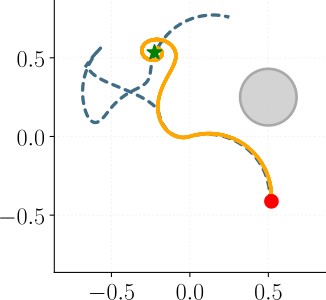} 
        \caption{Trial 3} 
        \label{fig:qual_sweep_c} 
    \end{subfigure} 

\vspace{0.5em}  

    \begin{subfigure}[t]{0.32\linewidth} 
    \centering 
        \includegraphics[width=\linewidth]{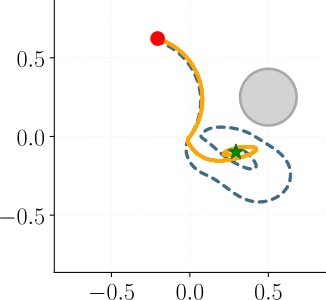} 
        \caption{Trial 4} 
        \label{fig:qual_sweep_d} 
    \end{subfigure} 
    \hfill 
    \begin{subfigure}[t]{0.32\linewidth} 
        \centering 
        \includegraphics[width=\linewidth]{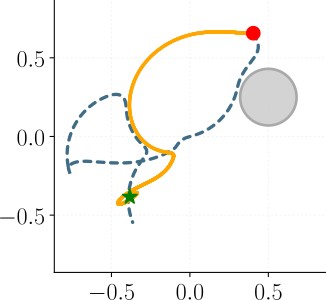} 
        \caption{Trial 5} 
        \label{fig:qual_sweep_e} 
    \end{subfigure} 
    \hfill 
    \begin{subfigure}[t]{0.32\linewidth} 
        \centering 
        \includegraphics[width=\linewidth]{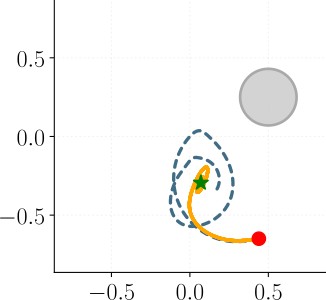} 
        \caption{Trial 6} 
        \label{fig:qual_sweep_f} 
    \end{subfigure} 

\vspace{0.95em} 
\centerline{\includegraphics[width=\linewidth]{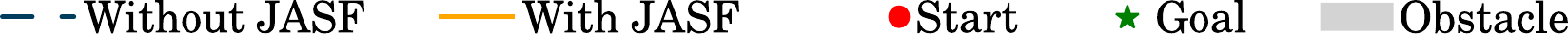}}

\caption{Qualitative comparison in trajectories of 6 random trials for a 2R planar manipulator.} 
\label{fig:qual_sweep} 
\end{figure}

\subsubsection{Quantitative Analysis}\label{sec:exp_2r_manip_quant}
To quantify the performance gains, the proposed JASF method is compared against baseline vanilla CBF method over 50 randomized trials. In each trial, the start and goal poses are chosen randomly. Figure~\ref{fig:quant} shows plots comparing four critical metrics, a) the max $\xi$, b) the number of violation steps of VE in each trial, c) the min$(h(q))$, and d) the final distance from goal.
Table~\ref{tab:quant} shows the mean and variance of the metrics for analytical metrics. While both the methods ensure that the low-level constraints $h(q)$ are always met, the JASF reduced the number of VE violations significantly. Additionally, it lead to a more conservative bound on $h(q)$ which means that the controller was pushed closer to the constraints on fewer occasions. The JASF-based approach also outperformed the baseline in goal reaching behavior and controlling the demanded nominal velocity $\xi$.
\begin{figure}[htbp] 
\centering  

\begin{subfigure}[t]{0.48\linewidth} 
\centering 
\includegraphics[width=\linewidth]{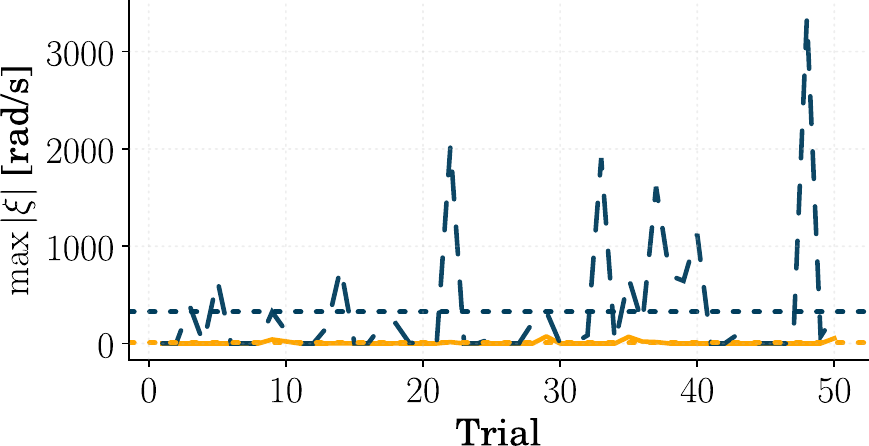} 
\caption{Magnitude of maximum commanded velocity.} 
\label{fig:quant_panel_a} 
\end{subfigure} 
\hfill 
\begin{subfigure}[t]{0.48\linewidth} 
\centering 
\includegraphics[width=\linewidth]{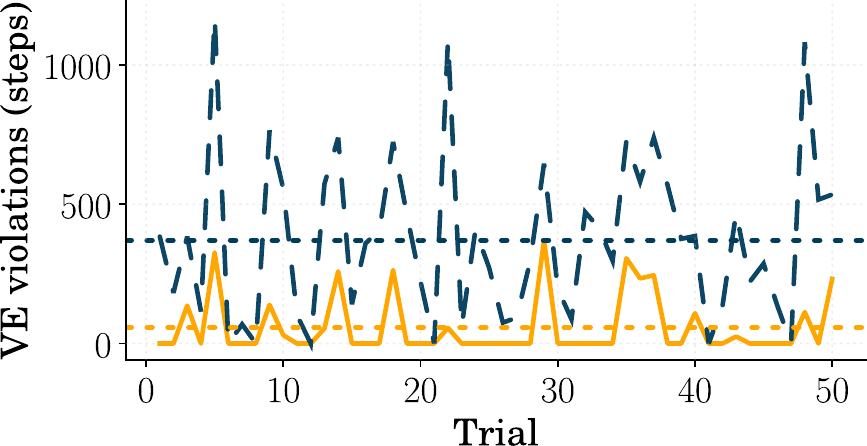} 
\caption{Number of steps with VE violation.} 
\label{fig:quant_panel_b} 
\end{subfigure}  

\vspace{0.5em}  

\begin{subfigure}[t]{0.48\linewidth} 
\centering 
\includegraphics[width=\linewidth]{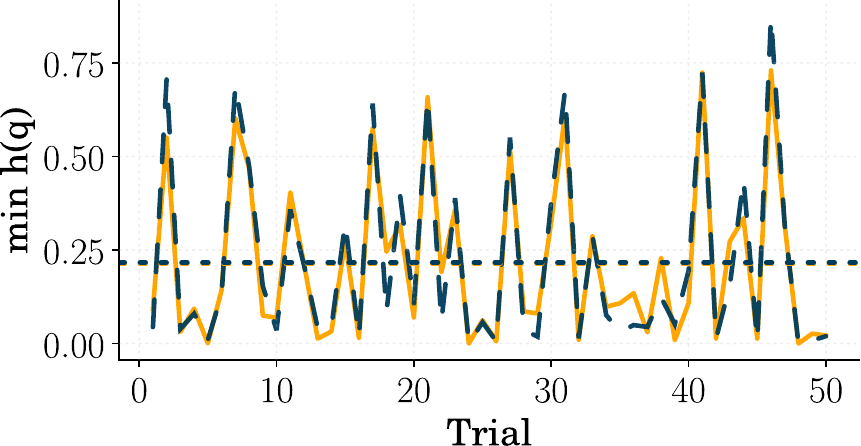} 
\caption{Minimum of $h(q)$.} 
\label{fig:quant_panel_c} 
\end{subfigure} 
\hfill 
\begin{subfigure}[t]{0.48\linewidth} 
\centering 
\includegraphics[width=\linewidth]{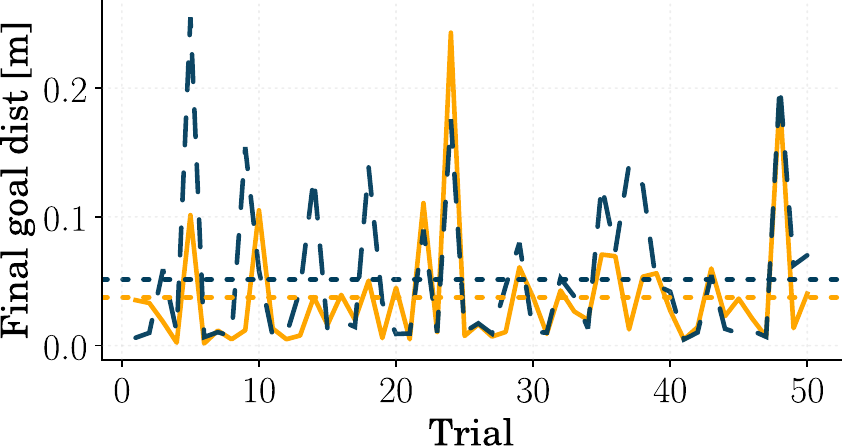} 
\caption{Final distance from goal.} 
\label{fig:quant_panel_d} 
\end{subfigure}  
\caption{Quantitative Analytical comparison after 50 random trials between the baseline without JASF (\textcolor{myblue}{Blue}) and proposed with JASF (\textcolor{myorange}{Orange}). Dotted horizontal line show mean values.} 
\label{fig:quant} 
\end{figure}

\begin{table}[htbp]
\centering
\caption{Safety and MCIS Viability Summary With and Without JASF}
\label{tab:safety_summary}
    \begin{tabular}{l|l|c}
        \hline
        \rule{0pt}{2.0ex}
        \textbf{Method} & \textbf{Metric} & \textbf{Value} \\
        \hline
        \rule{0pt}{2.5ex}
        \multirow{3}{*}{\textcolor{myblue}{\textbf{Without JASF}}}
        & Minimum $h(q)$ & $3.91 \times 10^{-4}$ \\
        & Safety Violations & 0 \\
        & VE Violations & 233 \\
        \hline
        \rule{0pt}{2.5ex}
        \multirow{3}{*}{\textcolor{myorange}{\textbf{With JASF}}}
        & Minimum $h(q)$ & $1.316 \times 10^{-3}$ \\
        & Safety Violations & 0 \\
        & VE Violations & 0 \\
        \hline
    \end{tabular}
\end{table}

\begin{table}[htbp]
    \centering
    \caption{Mean data on trajectory analysis after 50 trials. (Bold values highlight the minimum of the two.)}
    \label{tab:quant}
    \begin{tabular}{l|c|c}
        \hline
        \rule{0pt}{2.5ex}
         \textbf{Metric} & \textbf{\textcolor{myblue}{Without JASF}} & \textbf{\textcolor{myorange}{With JASF}}\\
         \hline
         CBF Violations (Steps) & $0.0 \pm 0.000$ & $0.0 \pm 0.000$\\
         VE Violations (Steps) & $395.360 \pm 263.767$ & $\mathbf{53.340 \pm 1.101}$\\
         $\text{min}~ h(q)$ & $\mathbf{0.262 \pm 0.311}$ & $0.278 \pm 0.315$\\
         Goal dist (m) & $0.063 \pm 0.075$ & $\mathbf{0.043 \pm 0.045}$ \\
         $\text{max}~\xi~ \text{(rad/s)}$ & $438.417 \pm 865.649$ & $\mathbf{15.204 \pm 47.836}$\\
         \hline
    \end{tabular}
\end{table}

\subsection{Comparative Analysis}
\subsubsection{Performance Comparison}
To better understand the benefits of the proposed method, it was compared against two more benchmark methods, a reference governor method and a NMPC-based PSF method. The reference governor follows the same PD-planner based goal-seeking planner and ensures that the reference trajectory always lies within $\Xi_{safe}$ by updating the $u$. Similarly, the NMPC-PSF generates a $N=10$ step look-ahead optimization for ensuring recursive feasibility. Figure~\ref{fig:4way_comp} shows the comparison after 20 random trials. The comparison is based on a) fraction of violations of VE, b) fraction of steps for which either of the inputs was saturated, c) min $h(q)$ which is essentially a measure of risk, and d) total compute time. Table~\ref{tab:4w_comp_perform} lists the values of the performance metrics for the same experiment along with the percentage cases where goal was reached successfully. The proposed JASF does not just significantly reduce the cost of computation but it also shows significant success in goal reaching and reducing VE violations. The JASF method shows significant reduction in VE violation, input saturation, has better goal reaching behavior, and reduced computation compared to NMPC-PSF which is the closest method on other metrics. 
\begin{figure}[htbp] 
\centering  
    \begin{subfigure}[t]{0.48\linewidth} 
        \centering 
        \includegraphics[width=\linewidth]{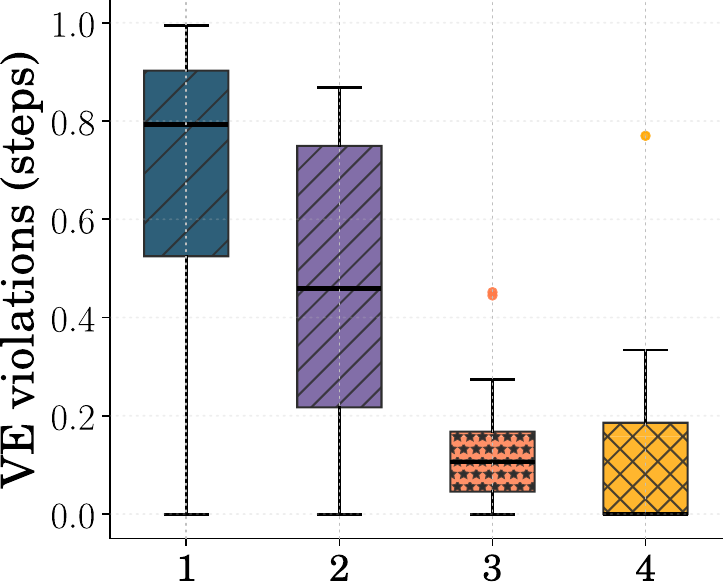} 
        \caption{Fraction of total step with violation of VE.} 
        \label{fig:4way_comp_panel-a} 
    \end{subfigure} 
    \hfill 
        \begin{subfigure}[t]{0.48\linewidth} 
        \centering 
        \includegraphics[width=\linewidth]{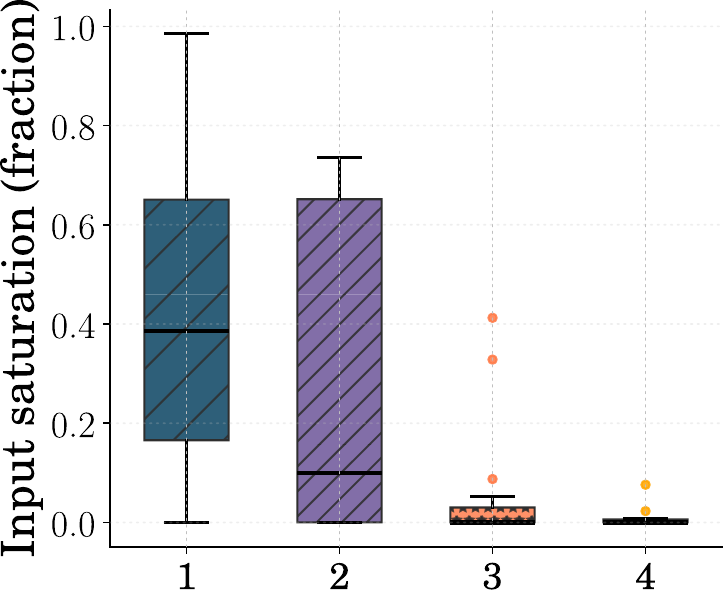} 
        \caption{Fraction of total steps with input saturation.} 
        \label{fig:4way_comp_panel-b} 
    \end{subfigure}  
\vspace{0.5em}  
    \begin{subfigure}[t]{0.48\linewidth} 
        \centering 
        \includegraphics[width=\linewidth]{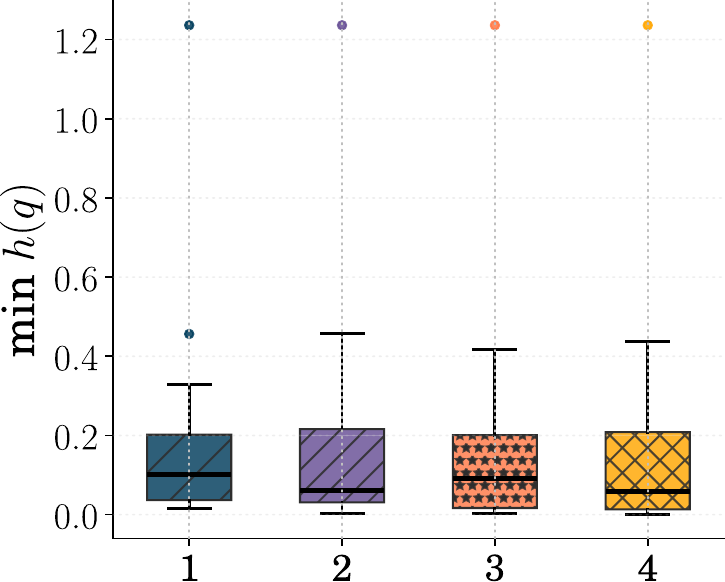} 
        \caption{Minimum of $h(q)$.} 
        \label{fig:4way_comp_panel-c} 
    \end{subfigure} 
    \hfill 
        \begin{subfigure}[t]{0.48\linewidth} 
        \centering 
        \includegraphics[width=\linewidth]{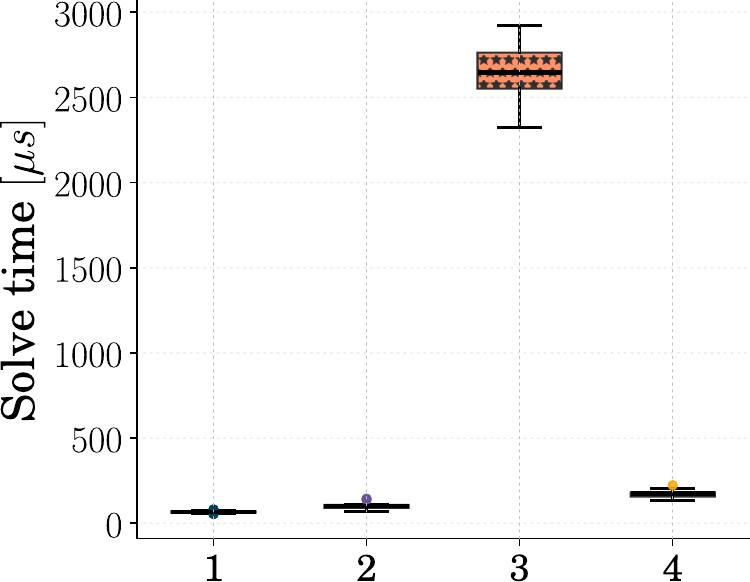} 
        \caption{Solve time.} 
        \label{fig:4way_comp_panel-d} 
    \end{subfigure}  
\vspace{0.95em} 
    \centerline{\includegraphics[width=\linewidth]{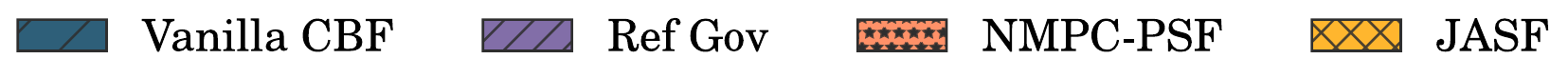}}
    \caption{Performance comparison between different methods.} 
\label{fig:4way_comp} 
\end{figure}

\subsubsection{Benefit in compute time}
Since the NMPC-PSF method was the closest in terms of performance, it essential to thoroughly validate the computational costs. Hence, it is compared against NMPC-PSF approach for the same setup with varying length of horizon. The experiments were repeated for randomized 20 trials. The proposed method achieved substantially lower computation times than NMPC-PSF across all tested prediction horizons, as shown in Fig.\ref{fig:psf}. Specifically, Fig.\ref{fig:psf_comp} indicates nearly an order-of-magnitude reduction in computation time.
\begin{table}[htbp]
\centering
\caption{Benchmark summary over 20 randomized trials. Values are mean $\pm$ standard deviation. (Bold values highlight minimum across all four methods.)}
\label{tab:4w_comp_perform}
\scriptsize
\setlength{\tabcolsep}{5pt}
\begin{tabular}{lcccc}
\toprule
\textbf{Metric} & \textbf{Baseline CBF} & \textbf{Ref.\ Gov} & \textbf{NMPC-PSF} & \textbf{JASF} \\
\midrule
VE frac.      & $0.7{\pm}0.3$   & $0.5{\pm}0.3$   & $0.1{\pm}0.1$   & $\mathbf{0.1{\pm}0.2}$ \\
Sat.\ frac.   & $0.4{\pm}0.3$   & $0.3{\pm}0.3$   & $0.1{\pm}0.1$   & $\mathbf{0.0{\pm}0.0}$ \\
min $h(q)$    & $0.2{\pm}0.3$   & $0.2{\pm}0.3$   & $0.2{\pm}0.3$   & $0.2{\pm}0.3$ \\
Goal dist. [m]& $0.0{\pm}0.0$   & $0.2{\pm}0.3$   & $0.0{\pm}0.0$   & $\mathbf{0.0{\pm}0.0}$ \\
Reached [\%]  & $65.0$          & $70.0$          & $85.0$          & $\mathbf{100.0}$ \\
Solve [$\mu$s]& $\mathbf{66.2{\pm}7.0}$ & $100.6{\pm}17.7$ & $2653.3{\pm}155.4$ & $172.8{\pm}22.6$ \\
\bottomrule
\end{tabular}
\end{table}
In addition, Fig.~\ref{fig:psf_rel} shows an approximately $20\times$ reduction relative to $N=5$ and nearly $65\times$ relative to $N=20$.
\begin{figure}[htbp]
    \centering

    \begin{subfigure}[b]{\linewidth}
        \centering
        \includegraphics[width=\linewidth]{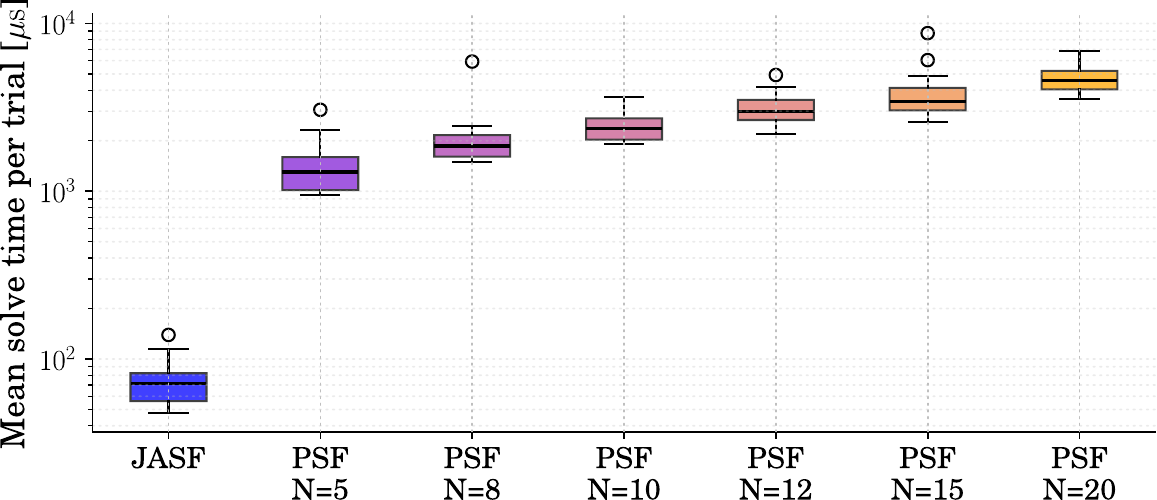}
        \caption{Distribution of mean solve time for per-trial.}
        \label{fig:psf_comp}
    \end{subfigure}
    \vspace{0.5em}
    \begin{subfigure}[b]{\linewidth}
        \centering
        \includegraphics[width=\linewidth]{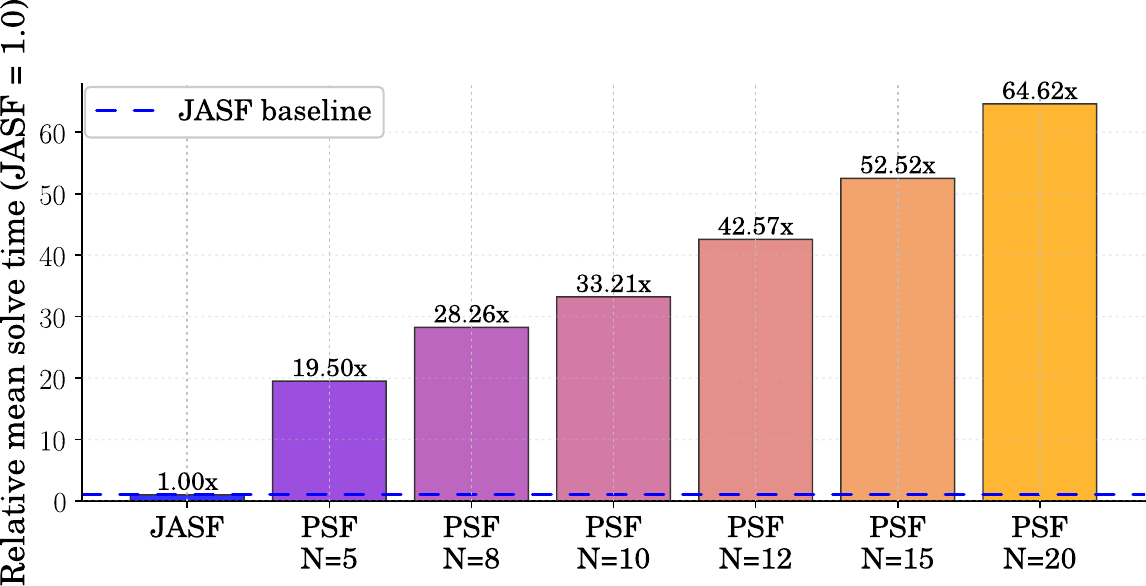}
        \caption{Relative comparison of mean solve time for the different PSF horizon length w.r.t. JASF.}
        \label{fig:psf_rel}
    \end{subfigure}

    \caption{Comparison in compute time for JASF against NMPC-based PSF with varying horizon length over randomized 20 trials.}
    \label{fig:psf}
\end{figure}

\begin{figure}[htbp]
\centering
\begin{subfigure}[t]{0.48\linewidth} 
\centering 
\includegraphics[width=\linewidth]{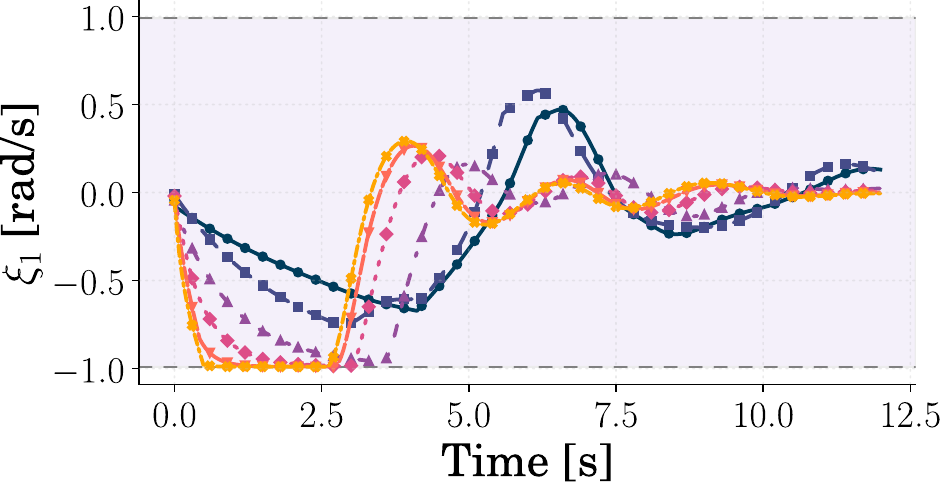} 
\caption{Evolution of $\xi_1$} 
\label{fig:sens_ana_panel_a} 
\end{subfigure} 
\hfill 
\begin{subfigure}[t]{0.48\linewidth} 
\centering 
\includegraphics[width=\linewidth]{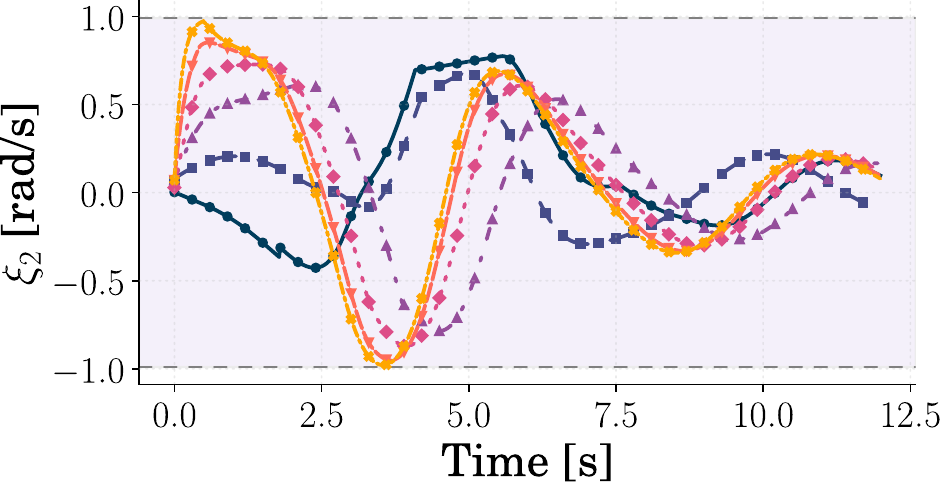} 
\caption{Evolution of $\xi_2$} 
\label{fig:sens_ana_panel_b} 
\end{subfigure}  

\vspace{0.85em} 
\centerline{\includegraphics[width=0.75\linewidth]{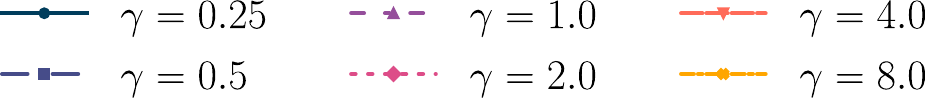}}

\caption{Effect of $\gamma$ on viability.} 
\label{fig:sens_ana} 
\end{figure}

\subsection{Sensitivity Analysis of Conservativeness}
The decay constant $\gamma$, introduced in \eqref{eq:jasf_qp}, determines the decay rate of the JASF-QP. Changing the value of $\gamma$ alters the conservative behavior of the whole system as it controls the decay rate to the safety boundary of $\hat{\mathcal{C}}$. Figure~\ref{fig:sens_ana} highlights the safety analysis for different values of $\gamma$ varying from 0.25 to 8.0. A smaller value of $\gamma$ makes the system more conservative. This allows one to tune the planner and control the conservative behavior of the planner by just adjusting a singular parameter. Enhancing the conservative behavior increases the constraint satisfaction characteristic of the method but can result in reduced performance with regard to reference tracking or goal seeking.

\subsection{Validation on High-DoF Setup}
Figure~\ref{fig:6dof} shows the validation results for a UR5 (6-DoF) robotic arm, simulated using the Robotics Toolbox fo Python\cite{rtb}. A fixed obstacle was placed in the workspace of the robot and a goal pose was provided beyond the obstacle. The experiments highlighted the benefit of having the JASF-QP layer. Without the JASF-QP, the robot gets stuck in a local minima while trying to ensure safety but reaches the limits of joint velocity.  
\begin{figure}[ht]
    \centering
    \includegraphics[width=\linewidth]{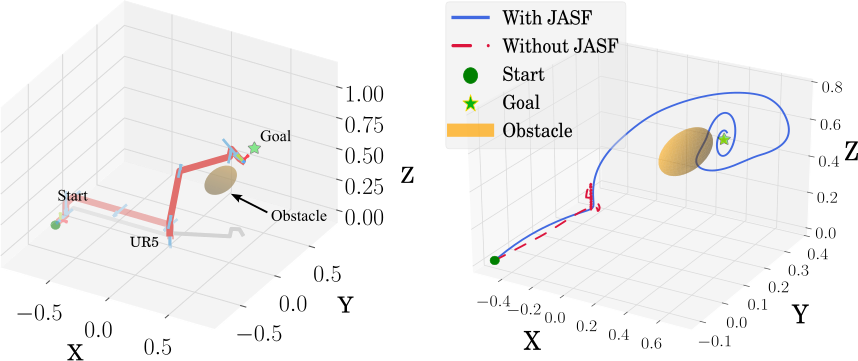}
    \caption{Illustrative demonstration of proposed method on 6-DoF (UR5) robot in simulation.}
    \label{fig:6dof}
\end{figure}
This was avoided by the JASF-QP. This case only focused on ensuring the safety of the end-effector and not the links. After 20 randomized trials, the baseline method resulted in an average final distance from goal of $0.491 \pm 0.152$ m, whereas the proposed JASF method resulted in a significantly better performance with $0.048 \pm 0.022$ m, nearly better by an order of the magnitude. Additionally, the average time for each step was $0.669 \pm 0.20$ milliseconds for the baseline CBF compared to the $1.081 \pm 0.34$ milliseconds for the JASF. This indicates towards potential online implementation for high DoF systems with significant improvement in performance. 



\section{Conclusions}\label{sec:conc}
This paper presented a viability preserving Jacobian-based state-augmented safety filter for kinematically controlled manipulators using a nested-QP architecture. The proposed approach regulates task-space acceleration at a high level so that the induced nominal joint-space command remains within a conservative viability envelope, while a low-level CBF-QP enforces joint-position and joint-velocity safety. In contrast to predictive safety filters that rely on longer-horizon optimization and fuller model knowledge, the proposed method remains model-light and suitable for online implementation. The analysis showed that, under standard regularity assumptions and feasibility of the outer QP, the conservative viability envelope is forward invariant. This provides a tractable mechanism for reducing future infeasibility of the low-level safety filter. Simulation results on a planar 2R manipulator demonstrated that the proposed method reduces viability-envelope violations, limits commanded-velocity growth, and improves goal-reaching behavior relative to a baseline one-step CBF-QP filter. Additional comparisons with a reference governor and an NMPC-based predictive safety filter showed favorable performance-computation tradeoffs, particularly in computation time relative to NMPC-PSF. A simulated 6-DoF example further illustrated applicability to higher-degree-of-freedom manipulators.


\section{Acknowledgments}
This material is based upon work supported by the U.S. Department of Energy Office of Critical Minerals and Energy Innovation, Advanced Materials and Manufacturing Technology Office, under Contract No. DE-AC05-00OR22725 with UT-Battelle, LLC, and conducted at the Oak Ridge National Laboratory (ORNL) Manufacturing Demonstration Facility, a DOE user facility sponsored by the Advanced Materials and Manufacturing Technology Office under CPS Agreement \# 24761.

\bibliographystyle{IEEEtran}
\bibliography{jasf_ref}

@article{Ames_cbf,
  author={Ames, Aaron D. and Xu, Xiangru and Grizzle, Jessy W. and Tabuada, Paulo},
  journal={IEEE Transactions on Automatic Control}, 
  title={Control Barrier Function Based Quadratic Programs for Safety Critical Systems}, 
  year={2017},
  volume={62},
  number={8},
  pages={3861-3876},
  doi={10.1109/TAC.2016.2638961}}

@ARTICLE{kineamtic_cbf,
  author={Singletary, Andrew and Kolathaya, Shishir and Ames, Aaron D.},
  journal={IEEE Control Systems Letters}, 
  title={Safety-Critical Kinematic Control of Robotic Systems}, 
  year={2022},
  volume={6},
  number={},
  pages={139-144},
  doi={10.1109/LCSYS.2021.3050609}}

@INPROCEEDINGS{energy_cbf,
  author={Choudhary, Yogita and Kolathaya, Shishir},
  booktitle={2022 European Control Conference (ECC)}, 
  title={Energy based Control Barrier functions for Robotic Manipulators with Large Safety Constraints}, 
  year={2022},
  volume={},
  number={},
  pages={1328-1335},
  doi={10.23919/ECC55457.2022.9838184}}

@article{wabersich2021predictive,
  author  = {Wabersich, Kim P. and Zeilinger, Melanie N.},
  title   = {A predictive safety filter for learning-based control of constrained nonlinear dynamical systems},
  journal = {Automatica},
  volume  = {129},
  pages   = {109597},
  year    = {2021},
  doi     = {10.1016/j.automatica.2021.109597}
}

@ARTICLE{psf_cbf,
  author={Wabersich, Kim P. and Zeilinger, Melanie N.},
  journal={IEEE Transactions on Automatic Control}, 
  title={Predictive Control Barrier Functions: Enhanced Safety Mechanisms for Learning-Based Control}, 
  year={2023},
  volume={68},
  number={5},
  pages={2638-2651},
  doi={10.1109/TAC.2022.3175628}}

@INPROCEEDINGS{psf_cbf3,
  author={Bena, Ryan M. and Bahati, Gilbert and Werner, Blake and Cosner, Ryan K. and Yang, Lizhi and Ames, Aaron D.},
  booktitle={2025 IEEE-RAS 24th International Conference on Humanoid Robots (Humanoids)}, 
  title={Geometry-Aware Predictive Safety Filters on Humanoids: From Poisson Safety Functions to CBF Constrained MPC}, 
  year={2025},
  volume={},
  number={},
  pages={1-8},
  doi={10.1109/Humanoids65713.2025.11203169}}

@INPROCEEDINGS{psf_cbf2,
  author={Zeng, Jun and Li, Zhongyu and Sreenath, Koushil},
  booktitle={2021 60th IEEE Conference on Decision and Control (CDC)}, 
  title={Enhancing Feasibility and Safety of Nonlinear Model Predictive Control with Discrete-Time Control Barrier Functions}, 
  year={2021},
  volume={},
  number={},
  pages={6137-6144},
  doi={10.1109/CDC45484.2021.9683174}}

@inproceedings{leeman2023predictive,
  author    = {Leeman, Antoine P. and K{\"o}hler, Johannes and Zeilinger, Melanie N.},
  title     = {Predictive safety filter using system level synthesis},
  booktitle = {Proceedings of Machine Learning Research (L4DC)},
  volume    = {211},
  pages     = {1--13},
  year      = {2023}
}

@article{leeman2024stability,
  author  = {Leeman, Antoine P. and others},
  title   = {Stability Mechanisms for Predictive Safety Filters},
  journal = {arXiv preprint arXiv:2404.05496},
  year    = {2024}
}

@inproceedings{choi2021robust,
  author    = {Choi, Jason J. and Lee, Donggun and Sreenath, Koushil and Tomlin, Claire J. and Herbert, Sylvia L.},
  title     = {Robust Control Barrier-Value Functions for Safety-Critical Control},
  booktitle = {IEEE Conference on Decision and Control (CDC)},
  year      = {2021}
}

@article{choi2023safety,
  author  = {Choi, Jason J. and others},
  title   = {On Safety and Liveness Filtering Using Hamilton-Jacobi Reachability Analysis},
  journal = {arXiv preprint arXiv:2312.15347},
  year    = {2023}
}

@misc{zhang2025viability,
  author = {Zhang, Zizhe and Wang, Yicong and Zhang, Zhiquan and Li, Tianyu and Figueroa, Nadia},
  title  = {Viability-Preserving Passive Torque Control (VPP-TC)},
  note   = {To appear in IEEE International Conference on Robotics and Automation (ICRA)},
  year   = {2026},
  eprint = {2510.03367},
  archivePrefix = {arXiv}
}

@inproceedings{roemer2026path,
          title={From Demonstrations to Safe Deployment: Path-Consistent Safety Filtering for Diffusion Policies},
          author={Ralf R{\"o}mer and Julian Balletshofer and Jakob Thumm and Marco Pavone and Angela P. Schoellig and Matthias Althoff},
          booktitle={IEEE International Conference on Robotics and Automation (ICRA)},
          year={2026}
        }

@misc{ubc2012viability,
  author = {UBC},
  title  = {Scalable computation of viability kernels and a viability-theoretic approach},
  school = {University of British Columbia},
  year   = {2012}
}

@inproceedings{gilbert1985fast,
  author    = {Gilbert, Elmer G. and Kolmanovsky, Ilya},
  title     = {Fast Reference Governors for Systems with State and Control Constraints},
  booktitle = {Proceedings of the American Control Conference},
  year      = {1985}
}

@INPROCEEDINGS{governor_kod,
  author={Arslan, Omur and Koditschek, Daniel E.},
  booktitle={2017 IEEE International Conference on Robotics and Automation (ICRA)}, 
  title={Smooth extensions of feedback motion planners via reference governors}, 
  year={2017},
  volume={},
  number={},
  pages={4414-4421},
  doi={10.1109/ICRA.2017.7989510}}

@article{li2023governor,
  author  = {Li, Zhichao and Atanasov, Nikolay},
  title   = {Governor-Parameterized Barrier Function for Safe Output Tracking with Locally Sensed Constraints},
  journal = {Automatica},
  year    = {2023},
  doi     = {10.1016/j.automatica.2023.111003}
}

@article{annurev_safetfilters_haimin,
   author = "Hsu, Kai-Chieh and Hu, Haimin and Fisac, Jaime F.",
   title = "The Safety Filter: A Unified View of Safety-Critical Control in Autonomous Systems", 
   journal= "Annual Review of Control, Robotics, and Autonomous Systems",
   year = "2024",
   volume = "7",
   number = "Volume 7, 2024",
   pages = "47-72",
   doi = "https://doi.org/10.1146/annurev-control-071723-102940",
   url = "https://www.annualreviews.org/content/journals/10.1146/annurev-control-071723-102940",
   publisher = "Annual Reviews",
   issn = "2573-5144",
   type = "Journal Article",
}

@ARTICLE{review_datadriven_sf,
  author={Wabersich, Kim P. and Taylor, Andrew J. and Choi, Jason J. and Sreenath, Koushil and Tomlin, Claire J. and Ames, Aaron D. and Zeilinger, Melanie N.},
  journal={IEEE Control Systems Magazine}, 
  title={Data-Driven Safety Filters: Hamilton-Jacobi Reachability, Control Barrier Functions, and Predictive Methods for Uncertain Systems}, 
  year={2023},
  volume={43},
  number={5},
  pages={137-177},
  doi={10.1109/MCS.2023.3291885}
}

@book{lavalle,
author = {LaValle, Steven M.},
title = {Planning Algorithms},
year = {2006},
isbn = {0521862051},
publisher = {Cambridge University Press},
address = {USA}
}

@inproceedings{rtb,
  title={Not your grandmother’s toolbox--the Robotics Toolbox reinvented for Python},
  author={Corke, Peter and Haviland, Jesse},
  booktitle={2021 IEEE International Conference on Robotics and Automation (ICRA)},
  pages={11357--11363},
  year={2021},
  organization={IEEE}
}

\end{document}